\documentclass{article} 
\usepackage{iclr2026_conference,times}

\usepackage[utf8]{inputenc}
\usepackage{amsmath,amsfonts,bm}
\usepackage{algorithm}
\usepackage{algorithmic}
\usepackage{wrapfig}
\usepackage{amssymb}
\usepackage{subfig}
\usepackage{amsmath}
\usepackage{makecell}
\usepackage{tabularx}
\usepackage{threeparttable}
\usepackage{multirow}
\usepackage{booktabs}
\usepackage{marvosym}
\usepackage{amsmath}
\usepackage{amsthm}
\theoremstyle{definition}
\newtheorem{definition}{Definition}
\theoremstyle{plain}
\newtheorem{proposition}[definition]{Proposition}
\usepackage{bm}
\usepackage{mwe}
\newcommand{\eg}{\emph{e.g.,}\xspace}

\let\oldhat\hat

\renewcommand{\hat}[1]{\oldhat{\mathbf{#1}}}

\def\eqref#1{equation~\ref{#1}}

\def\1{\bm{1}}

\DeclareMathAlphabet{\mathsfit}{\encodingdefault}{\sfdefault}{m}{sl}
\SetMathAlphabet{\mathsfit}{bold}{\encodingdefault}{\sfdefault}{bx}{n}

\usepackage{hyperref}
\usepackage{url}

\title{Topology of Reasoning: Retrieved Cell Complex-Augmented Generation for Textual Graph Question Answering}

\author{%
  Sen Zhao\thanks{Contributing equally.} \ \thanks{Correspondence to Sen Zhao \textless zhaosen@cqupt.edu.cn\textgreater.} \ $^1$, Lincheng Zhou\footnotemark[1] \ $^{2}$, Yue Chen$^2$, and Ding Zou$^{3}$ \\  
   $^1$ Academy of Advanced Interdisciplinary Studies, Chongqing University of \\ Posts and Telecommunications, Chongqing, China\\
   $^2$ School of Computer Science and Technology, Chongqing University of \\ Posts and Telecommunications, Chongqing, China\\
   $^3$ Intelligent System Department, Zhongxing Telecom Equipment (ZTE), \\ Changsha, Hunan, China 
}

\iclrfinalcopy 
\begin{document}

\maketitle
\begin{abstract}
Retrieval-Augmented Generation (RAG) enhances the reasoning ability of Large Language Models (LLMs) by dynamically integrating external knowledge, thereby mitigating hallucinations and strengthening contextual grounding for structured data such as graphs. 
Nevertheless, most existing RAG variants for textual graphs concentrate on low-dimensional structures—treating nodes as entities (0-dimensional) and edges or paths as pairwise or sequential relations (1-dimensional), but overlook cycles, which are crucial for reasoning over relational loops. Such cycles often arise in questions requiring closed-loop inference about similar objects or relative positions. This limitation often results in incomplete contextual grounding and restricted reasoning capability. In this work, we propose  \textbf{Topo}logy-enhanced \textbf{R}etrieval-\textbf{A}ugmented \textbf{G}eneration (TopoRAG), a novel framework for textual graph question answering that effectively captures higher-dimensional topological and relational dependencies. Specifically, TopoRAG first lifts textual graphs into cellular complexes to model multi-dimensional topological structures. Leveraging these lifted representations, a topology-aware subcomplex retrieval mechanism is proposed to extract cellular complexes relevant to the input query, providing compact and informative topological context. Finally, a multi-dimensional topological reasoning mechanism operates over these complexes to propagate relational information and guide LLMs in performing structured, logic-aware inference. Empirical evaluations demonstrate that our method consistently surpasses existing baselines across diverse textual graph tasks.
\end{abstract}
\section{Introduction}
Large Language Models (LLMs) exhibit strong language understanding and generation capabilities, but their reliance on pre-training corpora—limited in scope and timeliness—often leads to hallucinations, producing inaccurate or fabricated content that challenges knowledge-intensive reasoning~\cite{ref:hallucination}.
 To mitigate these issues, Retrieval-Augmented Generation (RAG) has recently emerged as an effective approach~\cite{fan2024survey, sun2024thinkongraph, baek-etal-2023-knowledge_kaping, sen-etal-2023-knowledge_rigel}, dynamically retrieving relevant external knowledge and incorporating it into the generation process. By enhancing contextual grounding and factual accuracy, RAG improves reasoning over structured data and reduces hallucination~\cite{gao2023retrieval_rag_survey}. However, traditional RAG methods often overlook the structured dependencies among textual entities and struggle to capture global relational patterns, limiting their applicability for graph-structured reasoning tasks. 

To address these challenges, Graph Retrieval-Augmented Generation (GraphRAG)~\cite{ref:graphrag, ref:grag, mavromatis2025gnn} extends conventional RAG by retrieving not only documents but also graph elements,  which provide richer relational context for reasoning over textual graphs.  \textit{G-Retriever}~\cite{he2024gretriever} introduces the first general GraphRAG framework for textual graphs, formulating retrieval as a Prize-Collecting Steiner Tree problem to extract compact and relevant subgraphs. GNN-RAG~\cite{mavromatis2025gnn} and SubgraphRAG~\cite{DBLP:conf/iclr/Li0025} further develop specialized retrieval modules to extract subgraphs from knowledge graphs. 
However, existing approaches,  primarily operate on low-dimensional elements and largely ignore higher-dimensional topological structures such as cycles, which are crucial for reasoning over relational loops and complex dependencies in textual graphs. This limitation becomes more evident when considering the structural nature of reasoning itself. Recent empirical studies~(\cite{DBLP:journals/corr/abs-2506-05744}) analyzing reasoning processes in large language models observe that stronger reasoning behaviors often involve recurrent dependency patterns that resemble cyclic structures in reasoning graphs. Such observations suggest that effective reasoning may inherently rely on non-linear relational organization, motivating the need to explicitly model cyclic dependencies in textual graphs.
\begin{wrapfigure}{r}{0.65\textwidth}
\centering\includegraphics[width=0.65\textwidth]{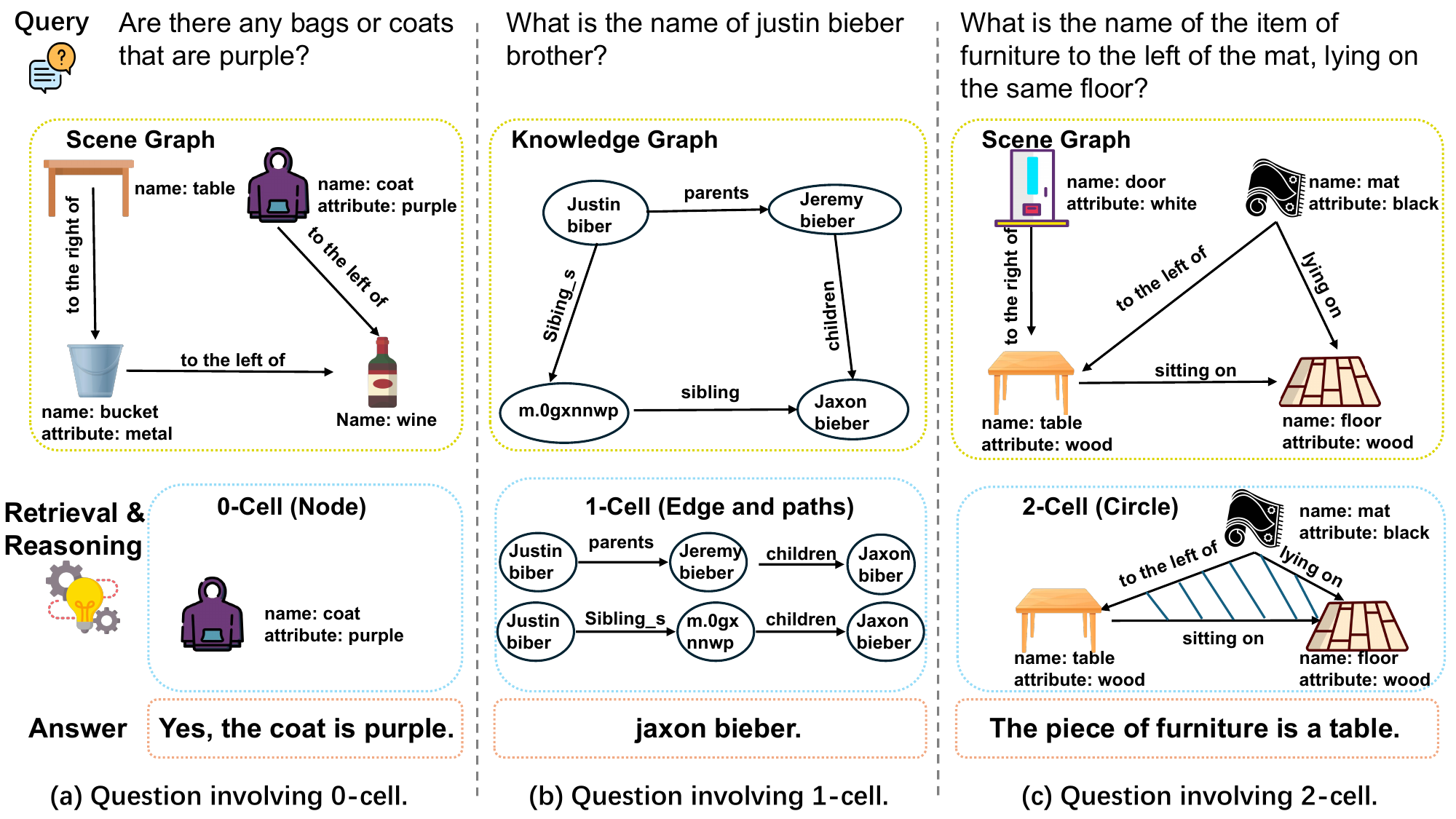} 
    \caption{Illustration of question answering with varying dimensional topological characteristics.}
    \label{fig:motivation}
  \end{wrapfigure}
  
In many real-world textual graphs, essential information arises not only from nodes (0-cells) that encode entity attributes, edges and paths (1-cells) representing pairwise or multi-hop relations, but also from cycles (2-cells) capturing higher-dimensional dependencies. As illustrated in Fig.~\ref{fig:motivation}, reasoning over 0-cells enables answering simple attribute-based questions (Fig.~\ref{fig:motivation} a), while incorporating 1-cells supports inference over one-hop to multi-hop relational queries (Fig.~\ref{fig:motivation} b). However, certain queries require cyclic dependencies among multiple entities, where the answer emerges only from reasoning over 2-cells. For example, the question in Fig.~\ref{fig:motivation} (c) involves a closed relational loop that links spatial relations with material consistency, which cannot be resolved by nodes and edges alone. Capturing structural information across multiple topological dimensions provides indispensable context for structured logical inference, as higher-dimensional dependencies complement lower-dimensional relations to enable reasoning beyond simple pairwise interactions. Consequently, retrieval and reasoning mechanisms that explicitly incorporate multi-dimensional topological features are essential for understanding and answering questions over complex textual graphs.


In this work, we propose \textbf{Topo}logy-enhanced \textbf{R}etrieval-\textbf{A}ugmented \textbf{G}eneration (\textbf{TopoRAG}), a novel framework for textual graph question answering that explicitly models higher-dimensional topological and relational dependencies. Specifically, TopoRAG first lifts input textual graphs into cellular complexes to capture multi-dimensional topological structures, including cycles that encode closed-loop dependencies critical for relational reasoning. Leveraging these lifted representations, a topology-aware subcomplex retrieval mechanism is introduced to extract cellular complexes that are most relevant to the input query, providing compact yet informative topological context for downstream reasoning. Furthermore, a multi-dimensional topological reasoning mechanism operates over the retrieved complexes to propagate relational information across different topological dimensions, enabling structured, logic-aware inference that naturally integrates with LLM reasoning. 
Extensive experiments demonstrate that TopoRAG consistently outperforms state-of-the-art baselines.

\section{Related Works}
\label{sec:related_work}

Large Language Models (LLMs) have shown impressive capabilities in language understanding and text generation, yet they remain constrained by the boundaries of their pre-training corpus, lacking domain-specific expertise, real-time updates, and proprietary knowledge. These limitations frequently manifest as hallucinations, where models produce inaccurate or fabricated content~\cite{ref:hallucination}. To address this issue, Retrieval-Augmented Generation (RAG)~\cite{fan2024survey,sun2024thinkongraph, baek-etal-2023-knowledge_kaping, sen-etal-2023-knowledge_rigel} has emerged as a promising paradigm.

RAG enhances LLMs by dynamically retrieving relevant external knowledge and incorporating it into the generation process, thereby improving factual accuracy, contextual grounding, and interpretability~\cite{gao2023retrieval_rag_survey}. Nevertheless, existing RAG methods are not without shortcomings in real-world applications. They often overlook structured dependencies among textual entities, rely on lengthy concatenated snippets that may obscure critical information (the “lost in the middle” problem~\cite{ref:lostin}), and struggle to capture global structural patterns essential for tasks such as query-focused summarization.

To address these challenges, Graph Retrieval-Augmented Generation (GraphRAG)~\cite{ref:graphrag,ref:grag,mavromatis2025gnn} extends conventional RAG by retrieving not only documents but also graph elements such as nodes, triples, and subgraphs. 
Building on this idea, G-Retriever~\cite{he2024gretriever} introduces the first general RAG framework for textual graphs, formulating retrieval as a Prize-Collecting Steiner Tree problem to extract compact and relevant subgraphs. GNN-RAG~\cite{mavromatis2025gnn} improves knowledge graph QA by integrating GNN-based representations for task-specific subgraph selection, and SubgraphRAG~\cite{DBLP:conf/iclr/Li0025} incorporates lightweight triple scoring and distance encoding to achieve efficient subgraph retrieval. Nevertheless, existing methods overlook high-dimensional cyclic dependencies, motivating our approach to incorporate multi-dimensional cell structures for enhanced retrieval and reasoning.

We also discuss related works on graphs \& LLMs, and topological deep learning in Appendix~\ref{app: related}.

\section{Preliminaries}
\paragraph{\textit{Definition 1. (Cell Complex~\cite{HGh19}).}} 
A \textbf{regular cell complex} is a topological space $X$ decomposed into a collection of disjoint subspaces $\{x_\sigma\}_{\sigma \in P_X}$, referred to as \textit{cells}, satisfying the following conditions:
\begin{enumerate}
    \item For each point $p \in X$, $\exists$ an open neighborhood intersects only finitely many cells.
    \item For any pair of cells $x_\sigma, x_\tau$, the intersection $x_\tau \cap \overline{x_\sigma}$ is nonempty if and only if $x_\tau \subseteq \overline{x_\sigma}$, where $\overline{x_\sigma}$ denotes the topological closure of $x_\sigma$.
    \item Each cell $x_\sigma$ is homeomorphic to an open ball in $\mathbb{R}^n$ for some non-negative integer $n$.
    \item \textit{(Regularity)} The closure $\overline{x_\sigma}$ of every cell is homeomorphic to a closed ball in $\mathbb{R}^{n_\sigma}$, with the interior mapped homeomorphically onto $x_\sigma$ itself.
\end{enumerate}
\paragraph{\textit{Definition 2.}} A \textbf{cellular lifting map} is a function \( f: \mathcal{G} \to X \) from the space of graphs \(\mathcal{G}\) to the space of regular cell complexes \(X\), satisfying that two graphs \(G_1, G_2 \in \mathcal{G}\) are isomorphic if and only if their corresponding cell complexes \(f(G_1)\) and \(f(G_2)\) are isomorphic. Intuitively, a cell complex is built hierarchically by first considering 0-cells (vertices), then attaching 1-cells (edges) via their endpoints, and further incorporating higher-dimensional cells by gluing disks along cycles.

\paragraph{\textit{Definition 3. (Retrieved Cell Complex-Augmented Question Answering).}}  
Given a textual graph $\mathcal{G}=(V,E,\{t_n\}_{n\in V},\{t_e\}_{e\in E})$, where each node $n\in V$ and edge $e\in E$ is associated with textual attributes $t_n \in D^{L_n}$ and $t_e \in D^{L_e}$, we lift $\mathcal{G}$ into a regular cell complex $X$ through a cellular lifting map $f:\mathcal{G}\to X$. The resulting complex $X=\{x_\sigma\}$ contains multi-dimensional structures, including 0-cells (nodes), 1-cells (edges/paths), and higher-dimensional cells (e.g., 2-cells as cycles).  

To enable retrieval, a query $Q$ is first encoded by a language model into a dense representation:
\begin{equation}
    z_Q = \text{LM}(Q)\in\mathbb{R}^d.
\end{equation}
Each cell $x_\sigma \in X$ is also represented by an embedding $z_\sigma$, obtained from its textual attributes together with a topological descriptor $z_\sigma^d$ that summarizes its $d$-dimensional structure.  
We then apply a top-$k$ similarity-based retrieval strategy to select the most relevant cells:
\begin{equation}
   X_k = \operatorname{TopK}_{x_\sigma \in {X}}\; \text{cos}(z_Q, z_\sigma^d),
\end{equation}
where $\text{cos}(\cdot,\cdot)$ denotes cosine similarity. This step yields a candidate set of cells ${X}_k=\{x_{\sigma_1},\ldots,x_{\sigma_k}\}$ spanning multiple topological dimensions.   

The task is defined as follows: given a natural language query $Q$ and the lifted cell complex $X$, the model must retrieve the most relevant subcomplexes ${X}^*$ and reason over their multi-dimensional structures to generate an answer $A$. Formally, the QA function is
\begin{equation}
    f: (\mathcal{G}, X, Q) \mapsto A,
\end{equation}
where $A$ is a natural language sequence generated by the LLM under the conditional likelihood
\begin{equation}
    p_\theta(A \mid [P_e; Q; X^*])=\prod_{i=1}^{|A|} p_\theta(a_i \mid a_{<i}, [P_e; Q; X^*]).
\end{equation}
Here, $[P_e; Q; X^*]$ denotes the concatenation of soft prompt embeddings $P_e$, query tokens, and retrieved subcomplex representations, while $a_{<i}$ represents the prefix of $A$ up to step $i-1$.  

Training proceeds by maximizing the likelihood of the ground-truth answer $A^\ast$:
\begin{equation}
    \max_{P_e} \; \log p_\theta(A^\ast \mid [P_e; Q; X^*]),
\end{equation}
where only the soft prompt parameters $P_e$ are updated, while the LLM parameters $\theta$ remain fixed.  

\section{The TopoRAG Framework}
In this section, we present the architecture of \textbf{TopoRAG} (illustrated in Figure~\ref{fig:model}), 
a topology-enhanced retrieval-augmented generation framework designed for textual graph question answering. 
TopoRAG is composed of four key components. 
First, the \textit{Cellular Representation Lifting} module transforms input textual graphs into regular cell complexes, providing expressive multi-dimensional topological representations that go beyond nodes and edges. 
Second, the \textit{Topology-aware Subcomplex Retrieval} module identifies the most relevant subcomplexes with respect to the query by jointly considering semantic similarity and topological structure. 
Third, the \textit{Multi-dimensional Topological Reasoning} module propagates relational information across different topological dimensions, enabling structured and logic-aware inference. 
Finally, the \textit{Cell Complex-Augmented Generation} module integrates retrieved subcomplex representations into the LLM to guide answer generation, ensuring faithful and topology-consistent responses. 
\begin{figure*}[t]
    \centering
    \includegraphics[width=\textwidth]{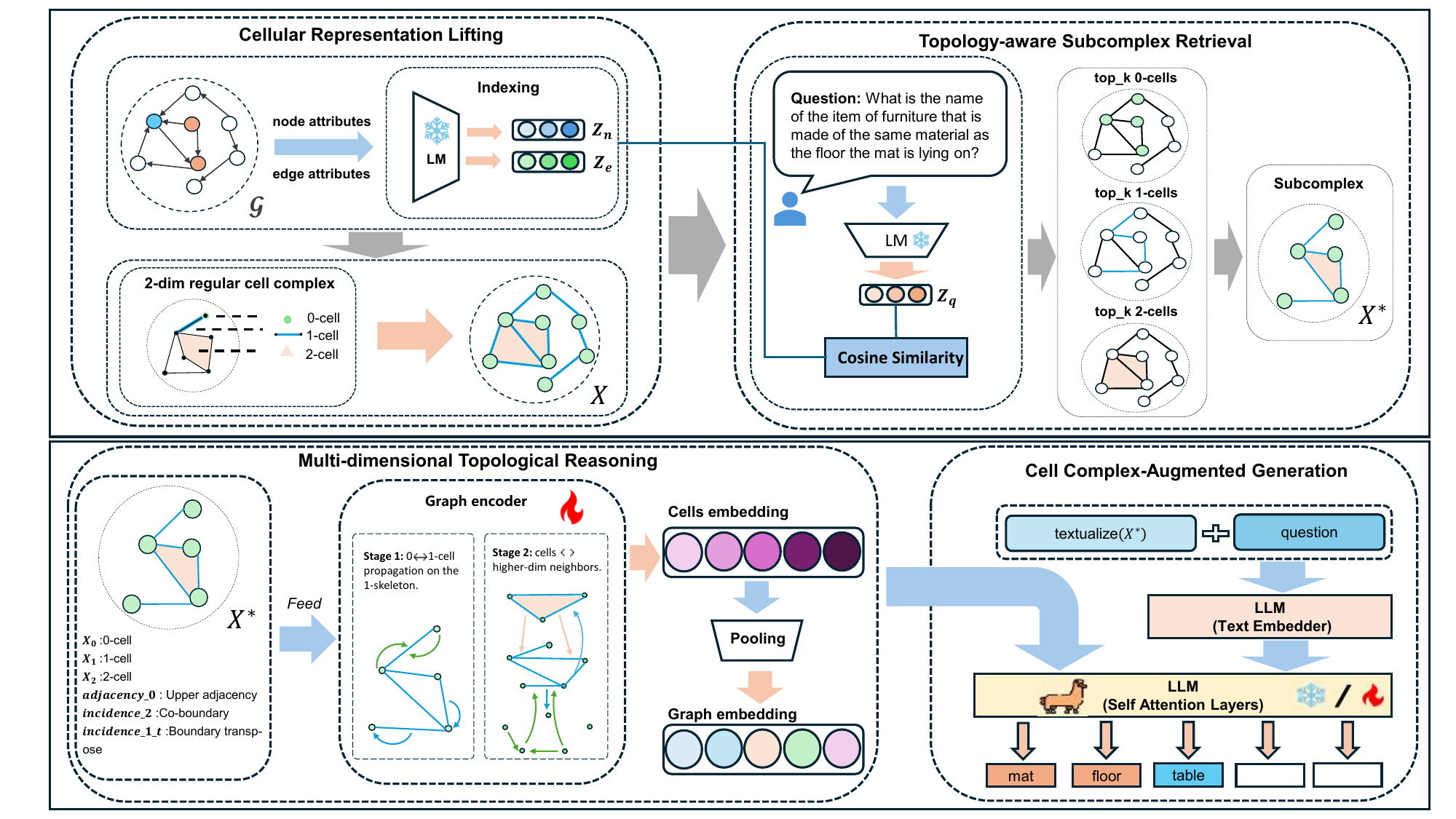}
    \caption{The overview of Topology-enhanced Retrieval-Augmented Framework.}
    \label{fig:model}
\end{figure*}

\subsection{Cellular Representation Lifting}
\label{sec: cellular pepresentation lifiting}
Given a textual graph $\mathcal{G}=(V,E,\{t_n\}_{n\in V},\{t_e\}_{e\in E})$, we aim to lift it into a higher-dimensional topological space that faithfully encodes both relational and structural dependencies. This is achieved by constructing a \textit{regular cell complex} $X$ through a cellular lifting map $f:\mathcal{G}\to X$.  

 We first regard $\mathcal{G}$ as a 1-dimensional cell complex, where each vertex $v\in V$ corresponds to a $0$-cell $x^0_v \in X^{(0)}$, and each edge $(u,v)\in E$ corresponds to a $1$-cell $x^1_{(u,v)} \in X^{(1)}$ attached to its endpoint $0$-cells $x^0_u$ and $x^0_v$. This forms the \emph{cellular 1-skeleton}:
\begin{equation}
    X^{(1)} = X^{(0)} \cup \{ x^1_{(u,v)} \mid (u,v)\in E \}.
\end{equation}

To elaborate, consider $t_v\in D^{L_v}$ as the text attributes of vertex $v$ and $t_{(u,v)}\in D^{L_{(u,v)}}$ as those of edge $(u,v)$. Utilizing a pre-trained LM, such as SentenceBert~\citep{reimers2019sentence_bert}, we apply the LM to these attributes, yielding the representations:
\begin{equation}
    z^0_v = \text{LM}(t_v)\in\mathbb{R}^d, \quad 
    z^1_{(u,v)} = \text{LM}(t_{(u,v)})\in\mathbb{R}^d,
\end{equation}
where $d$ denotes the dimension of the output vector. This yields cellular embeddings for both nodes and edges, which serve as the lifted representation in the subsequent module.

To incorporate high-dimensional topological structures, we extend $X^{(1)}$ by identifying fundamental cycles. Specifically, we fix a spanning tree $\mathcal{T}\subseteq G$ and apply the quotient map:
\begin{equation}
    \gamma: G \to G / \mathcal{T},
\end{equation}
which collapses $\mathcal{T}$ to a single point. Each non-tree edge $e=(u,v)\in E\setminus\mathcal{T}$ then induces a fundamental cycle by connecting $e$ with the unique path in $\mathcal{T}$ between $u$ and $v$. For every such cycle, we attach a $2$-cell $x^2_e \in X^{(2)}$ via the attaching map
\begin{equation}
    \varphi_\alpha : \partial D^2 \cong S^1 \to G^{(1)},
\end{equation}
that glues the boundary of a disk $D^2$ to the cycle. Formally, the set of $2$-cells is
\begin{equation}
    X^{(2)} = \{ x^2_e \simeq D^2 \mid e\in E\setminus\mathcal{T} \}.
\end{equation}
The resulting cell complex
$
    X = X^{(0)} \cup X^{(1)} \cup X^{(2)}
$ 
augments the original graph with multi-dimensional topological structures. 
\begin{proposition}
\label{prop.homotopy}
$G/\mathcal{T}$ is homotopy-equivalent to $G$, and $\gamma$ induces an isomorphism on the first homology group $H_1(G;\mathbb{Z})$.
\end{proposition}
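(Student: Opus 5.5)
The plan is to use the standard fact from algebraic topology that collapsing a contractible subcomplex is a homotopy equivalence. We regard $G$ as a 1-dimensional CW complex and $\mathcal{T}\subseteq G$ as a subcomplex. We assume, as is implicit in the construction, that $G$ is connected, so that $\mathcal{T}$ is a spanning tree rather than a forest; otherwise we argue componentwise with a spanning forest and collapse each tree separately.

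\textbf{Step 1: $\mathcal{T}$ is contractible.} A finite tree is contractible. The argument is by induction on the number of edges: a tree with at least one edge has a leaf, and deleting the leaf together with its pendant edge is a deformation retraction onto a smaller tree. For infinite graphs we would instead straight-line contract along geodesics to a root; this is only needed if the textual graph is infinite, which it is not in the setting of the paper.

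\textbf{Step 2: $(G,\mathcal{T})$ has the homotopy extension property.} $\mathcal{T}$ is a subcomplex of the CW complex $G$, so this holds.

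\textbf{Step 3: the quotient map is a homotopy equivalence.} We invoke the standard result (Hatcher, Prop.~0.17): if $(X,A)$ is a CW pair and $A$ is contractible, then the quotient $X\to X/A$ is a homotopy equivalence. Applied here, $\gamma: G\to G/\mathcal{T}$ is a homotopy equivalence. Concretely, $G/\mathcal{T}$ is a wedge of $|E|-|V|+1$ circles, one for each non-tree edge $e\in E\setminus\mathcal{T}$. This identification is also how the fundamental cycles of the lifting appear.

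\textbf{Step 4: homology.} A homotopy equivalence induces isomorphisms on all singular homology groups, so $\gamma_*: H_1(G;\mathbb{Z})\to H_1(G/\mathcal{T};\mathbb{Z})\cong\mathbb{Z}^{|E|-|V|+1}$ is an isomorphism. As a cross-check, and to make the isomorphism explicit, we can argue cellularly. The fundamental cycle $c_e$ of each non-tree edge $e$ is a cellular 1-cycle, and $\gamma$ sends $c_e$ to the generator of the circle corresponding to $e$. The classes $\{[c_e]\}$ form a basis of $H_1(G)=\ker\partial_1$. Independence holds because each $c_e$ is the only such cycle containing $e$. Spanning holds because any cycle $z$ satisfies $z-\sum_e z(e)c_e\in\ker\partial_1$ and is supported on $\mathcal{T}$, and $H_1(\mathcal{T})=0$ forces this difference to be $0$.

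The \textbf{main obstacle} is justifying Step 3 rigorously. This is not hard if we may cite the contractible-subcomplex result, but doing it from scratch requires building a homotopy inverse. For that we would extend a contraction of $\mathcal{T}$ to a homotopy of $G$ using homotopy extension, and check that it descends to the quotient. We would cite the result rather than re-prove it.
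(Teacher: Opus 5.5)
Your proposal is correct and follows essentially the same route as the paper: the spanning tree $\mathcal{T}$ is contractible, collapsing a contractible subcomplex is a homotopy equivalence, and homology is a homotopy invariant, so $H_1$ is preserved. Your version is tighter than the paper's. Invoking the CW-pair/homotopy-extension hypothesis (Hatcher, Prop.~0.17) shows that $\gamma$ itself is the homotopy equivalence, which is exactly what you need to conclude that $\gamma_*$ is an isomorphism, and which the paper only asserts loosely.
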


\begin{proof}
\textbf{Contractibility of the spanning tree.}  
A spanning tree $\mathcal{T}$ is connected and acyclic, hence it is contractible. 
In topological terms, a contractible subspace can be continuously shrunk to a point within itself.

\textbf{Collapsing a contractible subspace.}  
Consider the quotient map $\gamma: G \to G/\mathcal{T}$ that identifies all points of $\mathcal{T}$ to a single vertex $v_0$.  
Collapsing a contractible subspace is a deformation retraction up to homotopy: there exists a continuous map 
$r: G \to G/\mathcal{T}$ and a homotopy 
$H: G \times [0,1] \to G$ such that $H(x,0) = x$ and $H(x,1) = r(x)$ for all $x \in G$, with $r \circ \gamma \simeq \mathrm{id}_{G/\mathcal{T}}$.  
Therefore, $G$ and $G/\mathcal{T}$ are homotopy-equivalent.

\textbf{Induced isomorphism on first homology.}  
Homotopy-equivalent spaces have isomorphic homology groups.  
Hence the induced map 
$\gamma_*: H_1(G;\mathbb{Z}) \to H_1(G/\mathcal{T};\mathbb{Z})$ 
is an isomorphism.

\textbf{Intuition in graph terms.}  
The spanning tree $\mathcal{T}$ contains no cycles, so collapsing it does not remove or merge any cycles in $G$.  
Each fundamental cycle in $G$ (formed by a non-tree edge and the unique tree path connecting its endpoints) is preserved in $G/\mathcal{T}$ as a loop based at the collapsed tree vertex.  
Therefore, the first homology group $H_1(G)$ — which measures independent cycles — remains unchanged.
\end{proof}

\begin{proposition}
\label{prop.fundamental_cycles}
Each non-tree edge $e \in E \setminus \mathcal{T}$ induces a unique fundamental cycle in $G$, which becomes a nontrivial loop in $G/\mathcal{T}$. 
The collection of these loops forms a basis of the first homology group $H_1(G;\mathbb{Z})$, capturing all independent cycles and providing a concise topological summary of the graph.
\end{proposition}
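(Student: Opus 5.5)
The plan is to use the cellular structure of $G$ directly and reduce everything to a computation in $G/\mathcal{T}$, combined with Proposition~\ref{prop.homotopy}. Assume $G$ is connected, since a spanning tree exists; otherwise one argues componentwise with a spanning forest. Let $e=(u,v)\in E\setminus\mathcal{T}$. Because $\mathcal{T}$ is a tree spanning all of $V$, there is exactly one simple path $P_{uv}$ in $\mathcal{T}$ from $u$ to $v$. Hence $C_e = P_{uv}\cup\{e\}$ is a cycle, and it is the unique cycle contained in $\mathcal{T}\cup\{e\}$. If there were two distinct cycles in $\mathcal{T}\cup\{e\}$, both would have to use $e$. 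Deleting $e$ from each would give two distinct $u$--$v$ paths in $\mathcal{T}$, and their union would contain a cycle in $\mathcal{T}$, which is impossible. This settles the existence and uniqueness of the fundamental cycle.

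Next I describe $G/\mathcal{T}$ explicitly. Collapsing $\mathcal{T}$ sends every vertex to the single point $v_0$. It also sends every tree edge to $v_0$. Each non-tree edge becomes a $1$-cell whose two endpoints are both $v_0$, that is, a loop $\ell_e$. Thus $G/\mathcal{T}$ is a wedge of $m=|E|-|V|+1$ circles, one for each $e\in E\setminus\mathcal{T}$. Under $\gamma$, the cycle $C_e$ maps onto $\ell_e$, traversed exactly once, because all of its tree edges are collapsed. The cellular chain complex of the wedge has zero boundary map, since each loop has boundary $v_0-v_0=0$. Therefore $H_1(G/\mathcal{T};\mathbb{Z})\cong\mathbb{Z}^m$, with basis the classes $[\ell_e]$. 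Each $[\ell_e]$ is a basis element and so is nonzero, which means $\ell_e$ is a nontrivial loop.

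Finally I pull this basis back to $G$. By Proposition~\ref{prop.homotopy}, $\gamma_*:H_1(G;\mathbb{Z})\to H_1(G/\mathcal{T};\mathbb{Z})$ is an isomorphism. Since $\gamma_*[C_e]=[\ell_e]$, the classes $\{[C_e]\}_{e\notin\mathcal{T}}$ form a basis of $H_1(G;\mathbb{Z})$.

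As an independent check, I can argue entirely at the chain level, which avoids relying on the homotopy proposition. Because $G$ has no $2$-cells, $H_1(G)=\ker\partial_1$. The cycles are linearly independent: the non-tree edge $e$ appears with coefficient $\pm1$ in $C_e$ and does not appear in $C_{e'}$ for any $e'\ne e$. They also span: given any cycle $z$, the chain $z-\sum_e z_e C_e$, where $z_e$ is the coefficient of $e$ in $z$, is supported on $\mathcal{T}$ and lies in $\ker\partial_1$. Since $\mathcal{T}$ is a tree, it has no nonzero $1$-cycles, so this difference is zero.

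The main obstacle is the step verifying that $\gamma_*[C_e]=[\ell_e]$ with coefficient exactly $\pm1$, rather than merely up to homotopy. I would handle it with cellular chains: $\gamma$ is a cellular map, it sends tree $1$-cells to degenerate (zero) chains, and it sends each non-tree $1$-cell $e$ to $\ell_e$ with degree one. The chain-level argument above also fully covers this step.
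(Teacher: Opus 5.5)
Your proposal is correct and follows essentially the paper's route. The unique tree path gives $C_e$, and collapsing $\mathcal{T}$ turns $G$ into a wedge of $|E|-|V|+1$ circles in which the loops $\ell_e$ are independent, hence nontrivial. Your chain-level step, showing $z-\sum_e z_e C_e$ is supported on the tree and therefore zero, is the precise form of the paper's ``eliminate tree segments at each non-tree edge'' spanning argument. You differ from the paper in two small ways: your primary route obtains spanning from the isomorphism $\gamma_*$ of Proposition~\ref{prop.homotopy} rather than combinatorially, and you actually prove the uniqueness of $C_e$ and the nontriviality of $\ell_e$, which the paper only asserts.
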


\begin{proof}
Let $G=(V,E)$ be a finite connected graph and $\mathcal{T} \subset G$ a spanning tree.

\textbf{Existence and uniqueness of fundamental cycles.}  
For each non-tree edge $e=(u,v) \in E \setminus \mathcal{T}$, there exists a unique simple path $P_{\mathcal{T}}(u,v)$ in $\mathcal{T}$ connecting $u$ and $v$, by acyclicity of $\mathcal{T}$.  
Concatenating $e$ with $P_{\mathcal{T}}(u,v)$ defines a unique simple cycle  
$
C_e = e \cup P_{\mathcal{T}}(u,v) \subset G.
$ 
Under the quotient map $\gamma: G \to G/\mathcal{T}$ that collapses $\mathcal{T}$ to a point, the tree-path $P_{\mathcal{T}}(u,v)$ is mapped to that point, so $C_e$ becomes a nontrivial loop in $G/\mathcal{T}$.

\textbf{Spanning and independence in homology.}  
The cyclomatic number of $G$ is $\beta_1(G) = |E| - |V| + 1$, which equals the number of non-tree edges.  
Thus there are exactly $\beta_1(G)$ fundamental cycles.  

Any cycle in $G$ can be expressed as a linear combination of these fundamental cycles: traversing a cycle, each time a non-tree edge $e$ is encountered, the corresponding $C_e$ can be used to eliminate segments along $\mathcal{T}$.  

These cycles are independent in $H_1(G)$, because under $\gamma$, each fundamental cycle maps to a distinct loop in $G/\mathcal{T}$, and loops around different edges of a wedge of circles are linearly independent in homology.  

Hence the set 
$
\{ [C_e] \mid e \in E \setminus \mathcal{T} \}
$ 
forms a basis of $H_1(G)$. Different choices of spanning tree yield different sets of fundamental cycles as edge sets, but the corresponding homology classes always span $H_1(G)$.  
Therefore, these loops capture all independent cyclic dependencies of $G$, providing a concise topological summary suitable for lifting $G$ into a higher-dimensional cell complex.
\end{proof}

\subsection{Topology-aware Subcomplex Retrieval} 
\label{sec: topology-aware subxcomplex retrieval}
Given a query $x_q$, we first encode it into a $d$-dimensional embedding:
\begin{equation}
z_q = \text{LM}(x_q) \in \mathbb{R}^d.
\end{equation}

To retrieve the most relevant cells, we compute the cosine similarity between $z_q$ and the embeddings of $0$- and $1$-cells:
\begin{equation}
\begin{split}
\mathcal{X}^{(d)}_k &=  \operatorname{TopK}_{x^d \in X^{(d)}} \, \text{cos}(z_q, z^d_{x^d}), 
\end{split}
\end{equation}
where $d \in\{0,1\}$ denotes the dimension, $z^d_{x^d}$ is the embeddings of $d$-cell $x^d$. These provide the top-k relevant $0$- and $1$-cells.

\textit{Prize assignment.}  
Each selected $0$-cell $x^0 \in X^{(0)}_k$ and $1$-cell $x^1 \in X^{(1)}_k$ is assigned a descending prize according to its ranking:
\begin{equation}
\text{prize}(x^i) =
\begin{cases}
k - r, & \text{if } x^i \text{ is ranked $r$-th among top-$k$ cells},\\
0, & \text{otherwise},
\end{cases}
\quad i=0,1.
\end{equation}

For each $2$-cell $x^2 \in X^{(2)}$, its prize is computed from the prizes of its boundary cells:
\begin{equation}
\text{prize}(x^2) = \sum_{d\in\{0,1\}}\sum_{x^d \in \partial_d x^2} \text{prize}(x^d) - \text{cost}(x^2),
\end{equation}
where $\partial_d x^2$ denote the sets of boundary $d$-cells of $x^2$, and $\text{cost}(x^2) = |\partial_1 x^2| \cdot C_2$ penalizes larger faces with a tunable constant $C_2$.  
Top-$k$ $2$-cells are selected based on this prize ranking, denoted as $\mathcal{X}^{(2)}_k$, ensuring that all selected $2$-cells share boundary cells with the chosen $0$- and $1$-cells.

\textit{Subcomplex selection.}  
The final subcomplex $X^*$ maximizes the total prize while controlling size:
\begin{equation}
X^* = \underset{\substack{X' \subseteq X,\\ X' \text{ connected}}}{\text{argmax}} 
\sum_{d\in\{0,1,2\}}\sum_{x^d \in X'^{(d)}} \text{prize}(x^d) - \text{cost}(X'),
\end{equation}
where $\text{cost}(X')$ is a size-dependent penalty.
The boundary consistency constraint ensures that any selected $2$-cell $x^2 \in X^{*(2)}$ has all of its boundary $0$- and $1$-cells included in $X^{*(0)}$ and $X^{*(1)}$, preserving topological coherence.

The resulting topology-aware subcomplex selection problem can be seen as a generalization of Prize-Collecting Steiner Tree (PCST) problem~\citep{bienstock1993note_pcst} to higher-dimensional cell complexes with multi-dimensional prizes and size-dependent penalties. We adopt a near-linear time approximation algorithm~\cite{hegde2015nearly} to efficiently identify a near-optimal connected subcomplex $X^*$. This ensures that the final subcomplex captures the most query-relevant structures across all cell dimensions, while maintaining computational efficiency and topological validity.

\subsection{Multi-dimensional Topological Reasoning}  
After retrieving the query-relevant subcomplex $X^*=X^{*(0)}\cup X^{*(1)}\cup X^{*(2)}$, we propagate semantic and relational information across different topological dimensions to enable structured reasoning over the enriched cell complex. We employ a two-stage message passing mechanism that leverages the multi-dimensional structure of the complex. In the first stage, information is propagated along the 1-skeleton, between $0$-cells and $1$-cells, over $L$ hops:
\begin{align}
\bm{h}_x^{l} = \text{UPDATE}^{l}\Big( \bm{h}_x^{l}, m_{\mathcal{F}}^{l}(x), m_{\mathcal{C}}^{l}(x) \Big), 
\quad x \in X^{*(0)} \cup X^{*(1)}, \quad l=1,\dots,L,
\end{align}
where $m_{\mathcal{F}}^{l}(x)$ aggregates messages from faces, and $m_{\mathcal{C}}^{l}(x)$ aggregates messages from cofaces:
\begin{equation}
\begin{split}
m_{\mathcal{F}}^{l+1}(x) &= \text{AGG}_{y \in \mathcal{F}(x)} M_{\mathcal{F}}(\bm{h}_x^l, \bm{h}_y^{l}), \\
m_{\mathcal{C}}^{l+1}(x) &= \text{AGG}_{z \in \mathcal{C}(x)} M_{\mathcal{C}}(\bm{h}_x^l, \bm{h}_z^{l}), \\
\end{split}
\end{equation}
with $\mathcal{F}(x)$ and $\mathcal{C}(x)$ denoting the sets of faces and cofaces of $x$.

In the second stage, cells of all dimensions exchange information with higher-dimensional neighbors to capture multi-dimensional topological context. For each cell $x \in X^*$, the representation is updated as
\begin{align}
\bm{h}_x^{L+1} = \text{UPDATE}\Big( \bm{h}_x^{L}, m_{\mathcal{F}}^{L}(x), m_{\mathcal{C}}^{L}(x), m_{\uparrow}^{L+1}(x) \Big),
\end{align}
where $m_{\uparrow}^{L+1}(x)$ aggregates messages from adjacent cells via shared cofaces. Specifically, the messages are defined as
\begin{equation}
m_{\uparrow}^{L+1}(x) = \text{AGG}_{w \in \mathcal{N}_{\uparrow}(x)} M_{\uparrow}(\bm{h}_x^L, \bm{h}_w^L, \bm{h}_{x \cup w}^{L+1}),
\end{equation}
 with $\mathcal{N}_{\uparrow}(x)$ the set of cells adjacent to $x$ via a shared coface.

 To generate a fixed-dimensional representation of the entire subcomplex, we aggregate the embeddings of all its cells:
\begin{equation}
\bm{h}_{X^*} = \text{POOL}\Big( \{\bm{h}_x^{L+1} \mid x \in X^{*(0)} \cup X^{*(1)} \cup X^{*(2)} \} \Big) \in \mathbb{R}^{d_s},
\end{equation}
where POOL can be implemented as mean pooling over the cell embeddings, and $d_s$ denotes the dimension of the resulting subcomplex representation.  
This aggregated embedding $\bm{h}_{X^*}$ encodes both the semantic attributes of individual cells and the multi-dimensional topological context of the subcomplex, serving as input to the \textit{Cell Complex-Augmented Generation} module for query-guided answer generation.


\subsection{Cell Complex-Augmented Generation} 
With the subcomplex embedding $\bm{h}_{X^*}$ obtained from the Multi-dimensional Topological Reasoning module, we integrate it into a pretrained LLM to guide query-aware answer generation. First, we align the subcomplex embedding to the LLM's hidden space via a multilayer perceptron (MLP):
\begin{equation}
\hat{\bm{h}}_{X^*} = \text{MLP}_{\phi}(\bm{h}_{X^*}) \in \mathbb{R}^{d_l},
\end{equation}
where $d_l$ is the hidden dimension of the LLM. The projected vector $\hat{\bm{h}}_{X^*}$ acts as a soft prompt, providing structured, topologically-informed guidance to the LLM.

To leverage the LLM's text reasoning capabilities, we also transform the retrieved subcomplex into a textualized format, denoted as $\text{textualize}(X^*)$, by flattening the textual attributes of all cells while preserving the structural hierarchy. Given a natural language query $x_q$, we concatenate it with the textualized subcomplex and feed it into the LLM's embedding layer:
\begin{equation}
\bm{h}_t = \text{TextEmbedder}([\text{textualize}(X^*); x_q]) \in \mathbb{R}^{L \times d_l},
\end{equation}
where $[\cdot ; \cdot]$ denotes concatenation, $L$ is the number of tokens, and the TextEmbedder is a frozen pretrained LLM embedding layer.

The final answer $Y$ is generated autoregressively, conditioned on both the soft subcomplex prompt $\hat{\bm{h}}_{X^*}$ and the textual token embeddings $\bm{h}_t$:
\begin{equation}
p_{\theta, \phi}(Y \mid X^*, x_q) = \prod_{i=1}^{r} p_{\theta, \phi}(y_i \mid y_{<i}, [\hat{\bm{h}}_{X^*}; \bm{h}_t]),
\end{equation}
where $\theta$ denotes the frozen LLM parameters and $\phi$ denotes the trainable parameters of the MLP and the subcomplex encoder. Gradients are backpropagated through $\hat{\bm{h}}_{X^*}$, enabling the subcomplex encoder to learn to generate embeddings that are optimally informative for downstream generation.


\section{Experiments}
\subsection{Experiment Setup}

\paragraph{Datasets.} Following prior work~\cite{he2024gretriever}, we use three existing datasets to conduct experiments \footnote{https://github.com/Snnzhao/TopoRAG}:\texttt{WebQSP}~\cite{ref:webqsp},  \texttt{ExplaGraphs}~\cite{saha2021explagraphs} and  \texttt{SceneGraphs}~\cite{hudson2019gqa}. These datasets are standardized into a uniform format suitable for graph question answering~\cite{he2024gretriever}, allowing consistent evaluation across diverse reasoning tasks. More details about these datasets are provided in Appendix \ref{app: datasets}.

\paragraph{Comparison Methods.} To evaluate the performance of TopoRAG, we consider three categories of baselines. We provide more details in Appendix \ref{app: comparison methods}.

\textit{Inference-only LLMs} directly answer questions using the textual graph as input, 
including zero-shot prompting, zero-shot Chain-of-Thought (Zero-CoT)~\cite{kojima2022large_zero_cot}, 
Build-a-Graph prompting (CoT-BAG)~\cite{wang2023can_graphllm_survey}, 
and KAPING~\cite{baek-etal-2023-knowledge_kaping}, a knowledge-augmented zero-shot approach; 
\textit{Frozen LLMs with prompt tuning} keep model parameters fixed while optimizing the input prompt, 
including soft prompt tuning, GraphToken~\cite{perozzi2024let_graphtoken}, G-Retriever~\cite{he2024gretriever} with a frozen LLM and SubgraphRAG~\cite{DBLP:conf/iclr/Li0025}; 
\textit{Tuned LLMs} update model parameters using LoRA~\cite{hu2021lora}, 
including standard LoRA fine-tuning and G-Retriever w/ LoRA~\cite{he2024gretriever}  
combining retrieval augmentation with parameter-efficient tuning, GNN-RAG~\cite{mavromatis2025gnn}.

\paragraph{Evaluation Metrics.} 
For \texttt{ExplaGraphs} and \texttt{SceneGraphs}, the performance is measured using Accuracy, which calculates the percentage of correctly predicted answers. For \texttt{WebQSP}, we use the Hit metric, which measures the percentage of queries for which at least one of the top returned answers is correct. This metric is particularly suitable for multi-hop reasoning tasks, where the model must traverse multiple hops in a knowledge graph to retrieve the correct answer. 

\paragraph{Experiment Settings.} All experiments are conducted on two A6000-48G  GPUs. For retrieval, we set the top-\(k\) for 0- and 1-cells to \(k=3\) on \texttt{WebQSP}; on \texttt{SceneGraphs}, we set \(k=3\) for 0-cells and \(k=5\) for 1-cells. For 2-cells, we sweep the top-\(k\) over \(k \in \{0,1,2,3\}\). For reasoning, the number of layers is varied in \(\{2, 3, 4, 5\}\), with a uniform dimensionality of 1024 across all layers (input, hidden, and output). For generation, we employ the Llama-2-7B model~\cite{ref:llama} as the large language model backbone. When fine-tuning with LoRA~\cite{hu2021lora}, we set the rank \(= 8\), \(\text{alpha} = 16\), and dropout rate \(= 0.05\); for prompt tuning, we use \(10\) virtual tokens. The maximum input length is set to \(512\) tokens, and the maximum number of generated tokens is set to \(32\). 
We adopt the AdamW optimizer~\cite{loshchilov2017decoupled_adamw} with a learning rate of \(1 \times 10^{-5}\), a batch size of \(8\), and train for \(10\) epochs with early stopping (patience \(= 2\)).

\subsection{Experiment Result} 

\begin{table*}[!ht]
\caption{Performance comparison across \texttt{ExplaGraphs}, \texttt{SceneGraphs}, and \texttt{WebQSP} datasets under different configurations. The bold numbers indicate that the improvement of our model over the baselines is statistically significant with (p-value \(< 0.01\)), and the best baseline results are \underline{underlined}}.

\footnotesize
    \label{tab: main}
    \centering
    \resizebox{\textwidth}{!}{
    \begin{tabular}{l|lcccc}
    \toprule
    Setting & Method & \texttt{ExplaGraphs} & \texttt{SceneGraphs} & \texttt{WebQSP} \\
    \midrule
          \multirow{4}{*}{Inference-only}
          & Zero-shot & 0.5650 & 0.3974 & 41.06 \\
          & Zero-CoT & 0.5704 & 0.5260 & 51.30\\
          & CoT-BAG & 0.5794 & 0.5680 & 39.60  \\
          & KAPING & {0.6227} & {0.4375} & {52.64}\\
    \midrule
    \multirow{4}{*}{Frozen LLM w/ PT}
        & Prompt tuning 
        & 0.5763
        & 0.6341
        & 48.34
        \\
        & GraphToken
        & {0.8508}
        & {0.4903}
        & {57.05}
        \\
        & G-Retriever
        & 0.8516
        & 0.8131
        & 70.49
        \\
        & SubgraphRAG
        & 0.8535
        & 0.8074
        & \underline{86.61}
        \\
        & \textit{TopoRAG} \textbf{(Ours)}
        & 0.8899
        & 0.8362
        & 87.10
        \\
    \midrule
    \multirow{5}{*}{Tuned LLM}
    & LoRA 
    & 0.8538
    & 0.7862
    & 66.03\\
    & G-Retriever w/ LoRA
    & \underline{0.8705}
    & \underline{0.8683}
    & 73.79
    \\
    & GNN-RAG
    & 0.8466
    & 0.8149
    & 85.70
    \\
    & \textit{TopoRAG} w/ LoRA \textbf{(Ours)}
    & \textbf{0.9151}
    & \textbf{0.8768}
    & \textbf{90.66}
    \\
    \bottomrule
    \end{tabular}}
    \par\vspace{0.5em}
\end{table*}

\paragraph{Main Results.} As summarized in Table~\ref{tab: main}, our model consistently outperforms all baselines across datasets and configurations. We highlight three key findings:
\begin{itemize}
    \item \textbf{\textit{TopoRAG} delivers the strongest overall performance.} Compared to the best baseline, \textit{TopoRAG} improves \texttt{ExplaGraphs} and \texttt{SceneGraphs} Accuracy by 5.12\% and 0.98\%, respectively; on \texttt{WebQSP}, it increases the Hit metric by 4.67\%. We attribute the improvements to the following reasons: 1) \emph{Cellular Representation Lifting}, which transforms textual graphs into cellular complexes and explicitly encodes higher\mbox{-}dimensional structures that support closed\mbox{-}loop relational reasoning; 2) \emph{topology\mbox{-}aware subcomplex retriever} that selects query\mbox{-}relevant cellular complexes, supplying compact yet informative topological context; and 3) \emph{multi\mbox{-}dimensional topological reasoning} that propagates information across 0\mbox{-}/1\mbox{-}/2\mbox{-}cells to enable structured, logic\mbox{-}aware inference tightly integrated with LLM reasoning. Together, these components overcome the limitations of node/edge\mbox{-}centric methods and yield more accurate and robust QA over complex textual graphs.

    \item \textbf{Graph\mbox{-}structured prompts effectively improve QA performance.} All prompt\mbox{-}tuning approaches (e.g., GraphToken, SubgraphRAG) outperform inference\mbox{-}only baselines (Zero\mbox{-}shot, Zero\mbox{-}CoT), underscoring the value of structured context. \textit{TopoRAG} further improves upon these by grounding prompts in higher\mbox{-}dimensional topological dependencies—beyond nodes and edges—thereby providing richer, loop\mbox{-}aware relational context, especially for queries involving multi\mbox{-}hop and cyclic dependencies.

\end{itemize}
\begin{wraptable}{r}{0.55\textwidth}
    \centering
    \footnotesize
    \caption{Ablation Study on ExplaGraphs and WebQSP Datasets.}
    \label{tab: ablation}
    \vspace{0.5em}
    \begin{tabular}{lcc}
    \toprule
          Method & ExplaGraphs (Accuracy) & WebQSP (Hit)\\
          \midrule
         w/o CRL & 0.8576 & 84.96 \\
         w/o TSR & 0.8524 & 84.23 \\
         w/o MTR & 0.8611 & 85.46 \\
         \textbf{TopoRAG} & \textbf{0.9151} & \textbf{90.66} \\
         \bottomrule
    \end{tabular}
\end{wraptable}

\paragraph{Ablation Study.} We conduct an ablation study to evaluate the contribution of each component of TopoRAG. Specifically, we replace \textit{Cellular Representation Lifting} \textbf{(CRL)} with a standard edge-based graph structure, substitute \textit{Topology-aware Subcomplex Retrieval} \textbf{(TSR)} with shortest-path-based retrieval, and replace \textit{Multi-dimensional Topological Reasoning} \textbf{(MTR)} with a GCN for message passing.

1) Replacing \textit{Cellular Representation Lifting} \textbf{(CRL)} with an edge\mbox{-}only graph representation removes the lifting of textual graphs into cellular complexes and, consequently, the explicit encoding of higher\mbox{-}dimensional structures (e.g., cycles) that support closed\mbox{-}loop relational reasoning. This leads to a substantial performance drop, underscoring the necessity of CRL for modeling high\mbox{-}dimensional dependencies in RAG.

2) Replacing \textit{Topology-aware Subcomplex Retrieval} (TSR) with shortest-path retrieval restricts results to the 1-skeleton and removes 
2-cells, thereby discarding higher-dimensional motifs and closed-loop constraints that are critical for capturing query-relevant structure, underscoring the importance of TSR for sustaining \textit{TopoRAG}’s effectiveness.

3) When removing \textit{Multi-dimensional Topological Reasoning} \textbf{(MTR)}, the performance of TopoRAG drops significantly. This is due to the crucial role of MTR in enabling multi-dimensional message passing using cellular complexes. Without MTR, the model loses the ability to effectively propagate information from high-dimensional cells to low-dimensional ones, resulting in the loss of important high-dimensional structural information during the message passing process.

\paragraph{Hyper-parameter Study.} We study the sensitivity of TopoRAG to two key hyperparameters: the number of layers \(L\) and the top-\(k\) of 2-cell for subcomplex retrieval. The layer depth \(L\) controls the model's ability to capture hierarchical structures and long-range dependencies. Larger \(L\) enhances the model's representational capacity but may lead to overfitting or increased computational cost, while smaller \(L\) may limit structural information capture, causing underfitting. Figure \ref{fig:hyper-parameter} (a) shows the effect of different \(L\) values on performance: reasoning ability improves as \(L\) increases, but excessive depth reduces expressiveness. We also analyze the impact of the top-\(k\) parameter for 2\mbox{-}cells selection on retrieval.
 Too small \(k\) causes information loss, while too large \(k\) introduces noise. Figure \ref{fig:hyper-parameter} (b) illustrates the effect of different \(k \in \{0,1,2,3\}\) values, showing that a moderate \(k\) achieves a better balance between structural coverage and noise. In Appendix~\ref{app: effect of Top-K on retrieval}, we present an extended sensitivity analysis of the choice of \(k\).

\begin{figure*}[t]
    \centering
    \includegraphics[width=\textwidth]{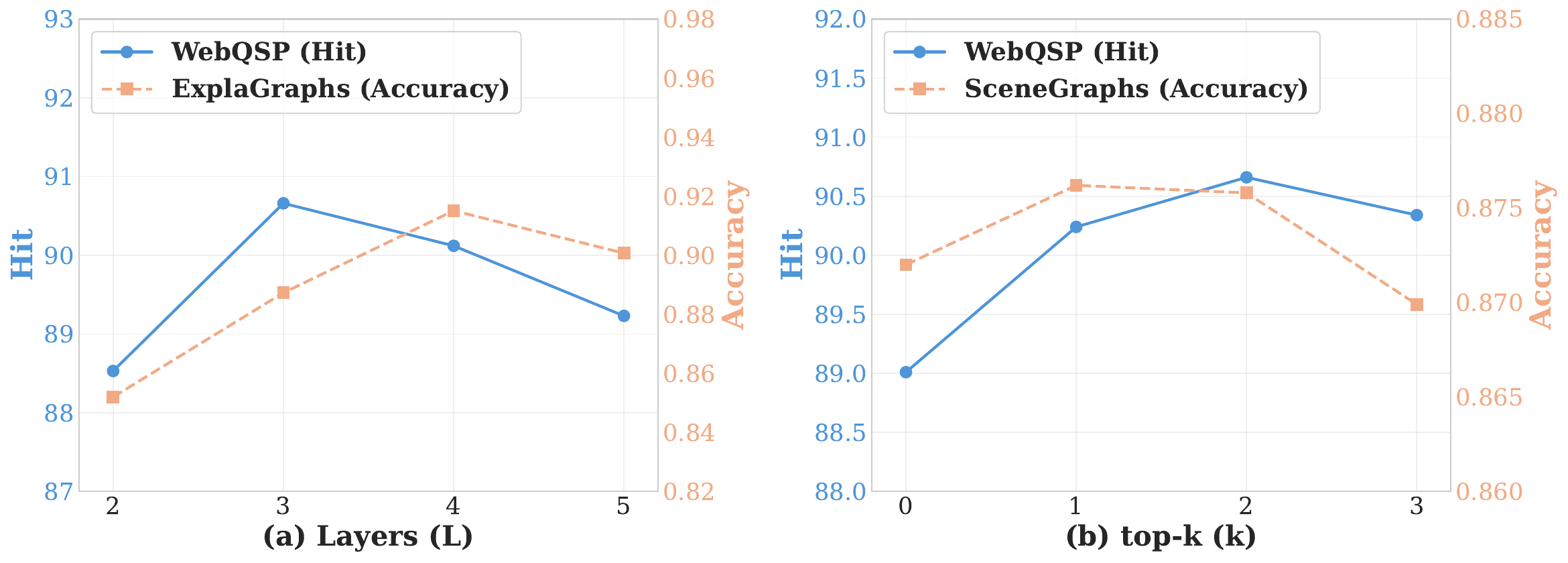}
    \caption{(a) Effect of layers \(L \in \{2,3,4,5\}\) on \textit{TopoRAG} performance: WebQSP (Hit) and ExplaGraphs (Accuracy). (b) Effect of top-\(k\) \(k \in \{0,1,2,3\}\) on \textit{TopoRAG} performance: WebQSP (Hit) and SceneGraphs (Accuracy).}
    \label{fig:hyper-parameter}
\end{figure*}
\section{Conclusion}

In this work, we introduced \textbf{TopoRAG}, a topology-enhanced retrieval-augmented generation framework for textual graph question answering. Unlike conventional  GraphRAG approaches that mainly rely on nodes and edges, TopoRAG explicitly incorporates higher-dimensional topological structures by lifting textual graphs into cellular complexes. Through a topology-aware subcomplex retrieval mechanism, TopoRAG provides compact yet informative multi-dimensional contexts, while the proposed multi-dimensional topological reasoning module enables structured and logic-aware inference that captures cyclic and higher-dimensional dependencies beyond pairwise relations. 
Experimental results demonstrate that TopoRAG outperforms existing methods across three datasets from different domains.
\section*{Acknowledgments}
This work is supported in part by the Chongqing Municipal Education Commission (No. KJQN202500620), and General Program of Chongqing Municipal Fund (NO. CSTB2025NSCQ-GPX1297). The authors would like to thank the anonymous reviewers for their valuable comments and advice.

\bibliography{iclr2026_conference}
\bibliographystyle{iclr2026_conference}

\newpage
\appendix

\section{Notations}
\label{app: notations}

The notations in the TopoRAG framework are summarized in Table~\ref{tab: notation}.

\begin{table}[t]
\centering
\small
\caption{Notation and definitions used in the TopoRAG framework.}
\label{tab: notation}
\resizebox{\textwidth}{!}{
\begin{tabular}{p{2.4cm}p{10cm}}
\toprule
\textbf{Notation} & \textbf{Definition} \\
\midrule
$\mathcal{G}$ & The input textual graph, defined as $\mathcal{G}=(V,E,\{t_n\}_{n\in V},\{t_e\}_{e\in E})$. \\
$V$, $E$ & The sets of vertices (nodes) and edges in the graph $\mathcal{G}$. \\
$t_n$, $t_e$ & The text attributes associated with node $n \in V$ and edge $e \in E$. \\
$X$ & The regular cell complex constructed from the graph $\mathcal{G}$. \\
$X^{(d)}$ & $d$-skeleton.\\
$x^k$ & A $k$-dimensional cell in the complex (e.g., $x^0$: a 0-cell, $x^1$: a 1-cell). \\
$z^0_v$, $z^1_e$ & The $d$-dimensional embedding of node $v$ (0-cell) and edge $e$ (1-cell), obtained via a Language Model (LM). \\
$\mathcal{T}$ & A spanning tree of the graph $\mathcal{G}$, used for cycle detection. \\
$x^2_e$ & A 2-cell attached to a fundamental cycle induced by a non-tree edge $e \in E \setminus \mathcal{T}$. \\
$z_q$ & The $d$-dimensional embedding of the input query $x_q$. \\
$\mathcal{X}^{(d)}_k$ & The set of top-$k$ retrieved $d$-cells ($d=0,1,2$) based on semantic similarity or prize. \\
$\text{prize}(x^i)$ & The prize (relevance score) assigned to cell $x^i$ during retrieval. \\
$X^*$ & The final retrieved and connected subcomplex, $X^*=X^{(0)*}\cup X^{(1)*}\cup X^{(2)*}$. \\
$\partial_k x$ & The boundary operator; $\partial_k x$ gives the set of $(k-1)$-cells on the boundary of a $k$-cell $x$. \\
$\bm{h}_x^l$ & The hidden representation of cell $x$ at message passing layer $l$. \\
$\mathcal{F}(x)$, $\mathcal{C}(x)$ & The set of faces (lower-dimensional boundary cells) and cofaces (higher-dimensional incident cells) of cell $x$. \\
$m_{\mathcal{F}}^l(x)$, $m_{\mathcal{C}}^l(x)$ & Messages aggregated from faces and cofaces of cell $x$ at layer $l$. \\
$\bm{h}_{X^*}$ & The final pooled representation of the entire subcomplex $X^*$. \\
$\hat{\bm{h}}_{X^*}$ & The projected subcomplex embedding, aligned to the LLM's hidden space via an MLP. \\
$p_{\theta, \phi}(Y \mid X^*,x_q)$ & The conditional probability of generating answer $Y$, given the subcomplex $X^*$ and query $x_q$. \\
\bottomrule
\end{tabular}}
\end{table}

\section{Additional Related Work}
\label{app: related}
\paragraph{Graphs and Large Language Models.}
In parallel, there has been a surge of interest in combining graphs with LLMs~\cite{pan2023integrating_graphllm_survey,li2023survey_graphllm_survey,jin2023large_graphllm_survey,wang2023can_graphllm_survey,zhang2023graph_graphllm_survey}. This line of research spans a wide spectrum, from the design of general graph models~\cite{ye2023natural_general_graph_model,liu2023one_general_graph_model,yu2023thought_general_graph_model,lei2023boosting_general_graph_model,tang2023graphgpt_general_graph_model,perozzi2024let_graphtoken}, to multi-modal architectures~\cite{li2023graphadapter_multimodal_model,yoon2023multimodal_multimodal_model}, and diverse downstream applications. 

Applications of Graph-augmented LLMs include fundamental graph reasoning~\cite{zhang2023graph_basic_graph_reasoning,chai2023graphllm_basic_graph_reasoning,zhao2023graphtext_basic_graph_reasoning}, node classification~\cite{he2023harnessing_nc,huang2023can_nc,sun2023large_nc,chenlabel,yu2023empower_nc,chen2024exploring,qin2023disentangled_nc}, and graph classification/regression~\cite{qian2023can_gc,zhao2023gimlet_gc}. Furthermore, LLMs have been increasingly employed for knowledge graph-related tasks such as reasoning, completion, and question answering~\cite{tian2023graph_kg,jiang2023structgpt_kg,luo2023reasoning_rog_kg}. 

\paragraph{Topological deep learning.}
Topological deep learning expands graph learning by modelling relations that exceed simple pairwise links. Early work in Topological Signal Processing (TSP) emphasized the value of higher-dimensional structure for signal and relational modeling \cite{barbarossa2020topological, schaub2021signal, roddenberry2022cellsp, sardellitti2022cell}, prompting extensions of graph tools to richer discrete geometries such as simplicial and cell complexes. Theoretical progress—e.g., higher-dimensional generalizations of the Weisfeiler–Lehman test—has clarified the expressive power required for distinguishing complex topologies and motivated message-passing schemes beyond traditional GNNs \cite{bodnar2021weisfeiler, DBLP:conf/nips/BodnarFOWLMB21}.

On the modelling side, researchers have proposed numerous neural architectures that operate on these higher-dimensional domains, including convolutional-style operators for simplicial and cell complexes \cite{ebli2020simplicial, yang2022effsimpl, hajij2020cell, yang2023convolutional, roddenberry2021principled, hajij2022} and attention-based formulations that incorporate incidence relations and coface interactions \cite{goh2022simplicial, giusti2023cell}. Efforts to unify these variants led to combinatorial-complex frameworks that generalize message passing to a wide class of combinatorial objects (simplicial complexes, CW complexes, hypergraphs) \cite{hajij2023topological}. Complementary lines of work use sheaf-theoretic constructions to impose local consistency and handle heterophilous patterns, demonstrating another principled route to encode localized topological constraints \cite{HGh19, hansen2020sheaf, sheaf2022, battiloro2022tangent, battiloro2024tangent, barbero2022sheaf}. 

While these works have substantially advanced higher-dimensional representation learning, current topological deep learning methods exhibit limited generalization to unseen graphs with substantially different topologies, a sensitivity that hinders their applicability in real-world scenarios where structural variability is common. Moreover, they remain largely tied to static, pre-defined complexes and traditional graph formulations, leaving them ill-suited for reasoning in question answering tasks that require integrating textual graph inputs with large language models and dynamically incorporating external knowledge. In this work, we address these limitations by introducing TopoRAG, a framework that explicitly models multi-dimensional topological structures and couples topology-aware retrieval with large language models to enable robust and context-sensitive reasoning over textual graphs.

\section{Datasets}
\label{app: datasets}
Following prior work~\cite{he2024gretriever}, we use three existing datasets: \texttt{WebQSP}, \texttt{ExplaGraphs} and \texttt{SceneGraphs}, which are summarized in Table~\ref{tab: dataset}. These datasets are standardized into a uniform format suitable for graph question answering, allowing consistent evaluation across diverse reasoning tasks. 
\texttt{ExplaGraphs} focuses on generative commonsense reasoning. The task requires predicting whether an argument supports or contradicts a given belief, evaluated using Accuracy. \texttt{SceneGraphs} is a visual question answering dataset. The task is to answer questions based on textualized scene graphs, requiring spatial reasoning and multi-step inference, evaluated by Accuracy. 
\texttt{WebQSP} is a large-scale multi-hop knowledge graph QA dataset, which contains facts within 2-hops of entities mentioned in the questions. Each question is associated with a subgraph extracted from Freebase. The task involves multi-hop reasoning, evaluated using Hit for the top returned answer.

\begin{table}[h]
    \centering
    \scriptsize
    \caption{Overview of datasets used in the GraphQA benchmark.}
    \vskip 0.1in
    \label{tab: dataset}
    \resizebox{\textwidth}{!}{
    \begin{tabular}{lccc}
    \toprule
         Dataset 
         & \texttt{ExplaGraphs} 
         &  \texttt{SceneGraphs} 
         & \texttt{WebQSP}\\
    \midrule
    Number of Graphs & 2,766 & 100,000 & 4,737\\
    Average Nodes & 5.17 & 19.13 & 1,370.89\\
    Average Edges & 4.25 & 68.44 & 4,252.37\\
    Node Features
    & Commonsense concepts
    & Object properties (\eg color, shape)  
    & Freebase entities
    \\
    Edge Features
    & Commonsense relations
    & Object interactions and spatial relations  
    & Freebase relations
    \\
    Task Type
    & Commonsense reasoning
    & Scene graph QA 
    & Knowledge graph QA
    \\
    Evaluation Metric & Accuracy & Accuracy & Hit
    \\
    \bottomrule
    \end{tabular}}
\end{table}

\section{Comparison Methods}
\label{app: comparison methods}
In our experiments, we consider three categories of baselines: 1) \textit{Inference-only}, 2) \textit{Frozen LLM w/ prompt tuning (PT)}, 3) \textit{Tuned LLM}. The details of each baseline are described as follows.

\paragraph{Inference-only.} Using a frozen LLM for direct question answering with textual graph and question.
\begin{itemize}
    \item Zero-shot. In this approach, the model is given a textual graph description and a task description, and is immediately asked to produce the desired output. No additional examples or demonstrations are provided.
    \item Zero-CoT. Zero-shot Chain-of-thought (Zero-CoT) prompting~\citep{kojima2022large_zero_cot} is a follow-up to CoT prompting~\citep{wei2022chain_cot}, which introduces an incredibly simple zero shot prompt by appending the words "Let's think step by step." to the end of a question.
    \item CoT-BAG. Build-a-Graph Prompting (BAG)~\citep{wang2023can_graphllm_survey} is a prompting technique  that adds "Let’s construct a graph with the nodes and edges first." after the textual description of the graph is explicitly given.
    \item  KAPING. KAPING~\citep{baek-etal-2023-knowledge_kaping} is a zero-shot knowledge-augmented prompting method for knowledge graph question answering. It first retrieves triples related to the question from the graph, then prepends them to the input question in the form of a prompt, which is then forwarded to LLMs to generate the answer.
\end{itemize}

\paragraph{Frozen LLM w/ prompt tuning (PT).} Keeping the parameters of the LLM frozen and adapting only the prompt.
\begin{itemize}
    \item GraphToken~\citep{perozzi2024let_graphtoken}, which is a graph prompt tuning method.
    \item G-Retriever~\cite{he2024gretriever} is an efficient and lightweight model that adapts frozen large language model parameters to graph question answering tasks solely through trainable graph-structured soft prompts.
    \item SubgraphRAG~\cite{DBLP:conf/iclr/Li0025} is a retrieval-augmented generation framework based on knowledge graphs, which effectively improves the accuracy, efficiency, and interpretability of question answering through lightweight subgraph retrieval and inference with an untuned large language model.
\end{itemize}

\paragraph{Tuned LLM.} Fine-tuning the LLM with LoRA. 
\begin{itemize}
    \item G-Retriever (w/ LoRA)~\cite{he2024gretriever} is a high-precision model that fine-tunes large language model parameters using techniques such as LoRA, enabling deep integration of graph structure information to enhance graph question answering performance.
    \item GNN-RAG~\cite{mavromatis2025gnn} is a retrieval-augmented generation framework based on graph neural networks, which efficiently retrieves multi-hop reasoning paths from knowledge graphs using GNNs and inputs them as context to an LLM, enhancing the accuracy and efficiency of complex knowledge graph question answering.
\end{itemize}

\section{Impact of Different Spanning Tree Methods}
We evaluate the sensitivity of our method to the choice of spanning tree by comparing different spanning tree construction strategies, including DFS, BFS, and Random spanning trees. As shown in Table~\ref{tab:spanning_tree_comparison}, all variants achieve comparable performance across datasets, indicating that the proposed framework is largely insensitive to the specific spanning tree algorithm.

This robustness can be explained from a topological perspective. Although different spanning trees induce different fundamental cycle bases, they generate the same cycle space, corresponding to an identical first homology group. Consequently, while the specific 2-cells constructed from basis cycles may vary, the lifted complexes preserve the same global topological structure.

\begin{table*}[htbp]
\centering
\caption{Performance comparison of different spanning tree algorithms}
\label{tab:spanning_tree_comparison}
\begin{tabular}{lccc}
\toprule
\textbf{Method} & \textbf{ExplaGraphs (Accuracy)} & \textbf{SceneGraphs (Accuracy)} & \textbf{WebQSP (Hit)} \\
\midrule
DFS & 0.9151 & 0.8768 & 90.66 \\
BFS & 0.9025 & 0.8725 & 90.18 \\
Random & 0.9187 & 0.8698 & 90.03 \\
\bottomrule
\end{tabular}
\end{table*}

\section{Impact of Top-\(K\) Selection on Subcomplex Retrieval}
\label{app: effect of Top-K on retrieval}

To further quantify the influence of \(k\) on subcomplex retrieval, we conduct supplementary experiments with \(k \in \{1,2,3\}\) and report, on \texttt{WebQSP} and \texttt{SceneGraphs}, the average numbers of \(0\)-, \(1\)-, and \(2\)-cells per retrieved subcomplex. As shown in Table~\ref{tab:impact-of-topk-selection}, larger \(k\) yields more retrieved \(2\)-cells; the accompanying inclusion of higher–dimensional structures also increases the counts of \(0\)- and \(1\)-cells, underscoring the trade-off that \(k\) introduces between structural coverage and noise.

\begin{table}[!ht]
    \centering
    \caption{Impact of Top-\(K\) Selection on Subcomplex Retrieval.}
    \vskip 0.1in
    \label{tab:impact-of-topk-selection}
    \begin{tabular}{lccccc}
    \toprule
    \multirow{2}{*}{Dataset} & \multirow{2}{*}{\(k\)} & \multicolumn{3}{c}{Number of Cells per Dimension}\\
    \cmidrule(lr){3-5}
    & & 0-cells & 1-cells & 2-cells & \\
    \midrule
    \multirow{2}{*}{\texttt{WebQSP}}
    & 1 & 15 & 17 & 2 \\
    & 2 & 16 & 18 & 3 \\
    & 3 & 17 & 20 & 4 \\
    \midrule
    \multirow{2}{*}{\texttt{SceneGraphs}}
    & 1 & 9 & 12 & 4 \\
    & 2 & 9 & 13 & 5 \\
    & 3 & 9 & 14 & 6 \\
    \bottomrule
    \end{tabular}
\end{table}

\section{Discussion on the Complexity}
\label{app: disscussion on the complexity}
Following prior work, TopoRAG adopts the LLM+X framework, which enhances LLMs with multimodal capabilities by integrating them with encoders from other modalities. In recent years, the LLM+X framework has been widely adopted in various RAG methods. Notable examples include: 1) KG+LLM approaches, such as RoG, ToG, StructGPT, and KAPING, and 2) GNN+LLM approaches, including G-Retriever and GNN-RAG. These models have demonstrated strong performance in both efficiency and accuracy. Furthermore, we introduce Topo+LLM.

Regarding the integration of topological methods into RAG, it does not significantly increase the time or computational complexity associated with LLM-based answer generation, as both cellular complex construction and subcomplex retrieval are implemented during the preprocessing phase.

In contrast to G-Retriever, we use a more complex \textit{Multi-dimensional Topological Reasoning} to capture high-dimensional topological structures, which introduces some additional time overhead. However, the benefits achieved outweigh the costs. To validate this, we conducted experiments using two A6000-48G GPUs with Llama2-7b as the LLM, training on \texttt{ExplaGraphs} and \texttt{SceneGraphs}. Detailed experimental settings are provided in Appendix C. The results in Table \ref{tab: Complexity} show that, compared to baseline methods, our approach incurs only a slight increase in runtime, yet significantly improves model performance.

\begin{table}[!ht]
    \centering
    \caption{Performance and Efficiency Comparison of TopoRAG and G-Retriever on \texttt{ExplaGraphs} and \texttt{SceneGraphs} Datasets.}
    \vskip 0.1in
    \label{tab: Complexity}
    \resizebox{\textwidth}{!}{
    \begin{tabular}{llcccc}
    \toprule
    \multirow{2}{*}{Setting} & \multirow{2}{*}{Method} & \multicolumn{2}{c}{\texttt{ExplaGraphs}} & \multicolumn{2}{c}{\texttt{SceneGraphs}} \\
    \cmidrule(lr){3-4} \cmidrule(lr){5-6}
    & & Hit & Time & Accuracy & Time \\
    \midrule
    \multirow{2}{*}{Frozen LLM w/ PT}
    & G-Retriever & 0.8516 & 2.6 min/epoch & 0.8131 & 267 min/epoch \\
    & \textit{TopoRAG} & 0.8899 & 3.0 min/epoch & 0.8362 & 300 min/epoch \\
    \midrule
    \multirow{2}{*}{Tuned LLM}
    & G-Retriever w/ LoRA & 0.8705 & 3.0 min/epoch & 0.8683 & 285 min/epoch \\
    & \textit{TopoRAG} w/ LoRA & 0.9151 & 3.3 min/epoch & 0.8768 & 310 min/epoch \\
    \bottomrule
    \end{tabular}}
\end{table}

\section{Algorithms}
\label{app: algorithms}

\paragraph{Cellular Representation Lifting Algorithm.} We present Algorithm~\ref{alg: cellular_lifting}, which formally describes the Cellular Representation Lifting process outlined in Section~\ref{sec: cellular pepresentation lifiting}. The algorithm takes a textual graph $\mathcal{G}$ and lifts it into a regular cell complex $X$ endowed with feature representations for all its cellular substructures. The algorithm proceeds in two main phases. 

The first phase (Lines 1--15) constructs the \textbf{1-skeleton} ($X^{(1)}$) of the complex. It initializes the sets of 0-cells ($X^{(0)}$) and 1-cells ($X^{(1)}$) and their corresponding feature dictionaries ($\mathbf{Z}^0$, $\mathbf{Z}^1$). For each vertex $v \in V$, a 0-cell $x_v^0$ is instantiated, and its representation $z_v^0$ is obtained by encoding the vertex's text attribute $t_v$ using a pre-trained language model (LM). Similarly, for each edge $e \in E$, a 1-cell $x_e^1$ is created, attached to the 0-cells of its endpoints, and its representation $z_e^1$ is computed from the edge text $t_e$.

The second phase (Lines 16--28) augments the 1-skeleton with \textbf{2-cells} to capture higher-dimensional topological information. A spanning tree $\mathcal{T}$ of $\mathcal{G}$ is computed, whose complement $E_{\text{non-tree}}$ defines the set of fundamental cycles in the graph. For each non-tree edge $e \in E_{\text{non-tree}}$, the algorithm identifies the corresponding fundamental cycle by finding the unique path between the endpoints of $e$ in $\mathcal{T}$ and appending $e$ itself (as detailed in the subroutine Algorithm~\ref{alg: find_cycle}). A 2-cell $x_e^2$ is then created and attached to this cycle. The initial representation $z_e^2$ for the 2-cell is computed by aggregating (e.g., via mean pooling) the representations of all the 0-cells and 1-cells that constitute its boundary cycle. This provides an inductive bias that initializes the feature of a higher-dimensional cavity based on the features of its lower-dimensional boundaries.

Finally, the complete cell complex $X$ is formed by the union of all cells across dimensions (Line 30). The algorithm returns both the topological structure $X$ and the associated features $\mathbf{Z}^0, \mathbf{Z}^1, \mathbf{Z}^2$, which serve as the input for subsequent cellular message-passing networks \citep{neurips-bodnar2021b}. This lifting procedure effectively transforms a plain graph into a richer topological domain, explicitly encoding relational cycles as tangible geometric entities.

\paragraph{Topology-aware Subcomplex Retrieval Algorithm.} Algorithm~\ref{alg: subcomplex_retrieval} formalizes the Topology-aware Subcomplex Retrieval process outlined in Section~\ref{sec: topology-aware subxcomplex retrieval}. The algorithm takes as input the cell complex $X$ with precomputed cellular embeddings, a textual query $x_q$, and parameters $k$ and $C_2$. It returns a connected subcomplex $X^*$ that maximizes the relevance prize under topological constraints. The procedure operates in three distinct phases.

The first Phase is Query-based Cell Retrieval (Lines 1--7). The algorithm begins by encoding the query $x_q$ into an embedding $z_q$ using the language model (LM). It then computes the cosine similarity between $z_q$ and the embeddings of all $0$-cells and $1$-cells. The top-$k$ most similar cells from each dimension are selected, forming the initial sets of highly relevant candidates, $X^{(0)}_k$ and $X^{(1)}_k$.

The second Phase is Multi-dimensional Prize Assignment (Lines 9--26). This phase assigns a prize value to each cell, quantifying its incentive for inclusion in the final subcomplex. Prizes for the top-$k$ $0$-cells and $1$-cells are assigned in descending order based on their similarity ranking (e.g., the highest-ranked cell receives a prize of $k$). For a $2$-cell $x^2$, its prize is computed inductively as the sum of the prizes of all its boundary $0$-cells and $1$-cells, minus a cost penalty $\text{cost}(x^2) = |\partial_1 x^2| \cdot C_2$ that discourages the selection of overly large faces. This design propagates relevance signals from lower-dimensional cells to the higher-dimensional structures they define. Finally, the top-$k$ $2$-cells by prize, denoted $X^{(2)}_k$, are selected.

The last phase is Prize-Collecting Steiner Subcomplex Extraction (Lines 28--34). The core challenge is to extract a \textit{connected} subcomplex that includes high-prize cells while respecting the \textit{boundary consistency constraints} (i.e., if a $2$-cell is selected, all its boundary cells must also be included). We model this as a Prize-Collecting Steiner Tree (PCST) problem on a hypergraph representation $G_{\text{hyper}}$ of the cell complex, where $2$-cells are modeled as hyperedges. The set $R_{\text{terminals}}$ consists of the top-$k$ cells from all dimensions, and the prize function $\mathcal{P}$ is defined from the previous phase. Solving this generalized PCST problem yields a connected subcomplex $X^*$ that approximates the optimal trade-off between total collected prize and the cost of the required connecting cells. We employ a near-linear time approximation algorithm~\citep{hegde2015nearly} to ensure computational feasibility.

The algorithm's output, $X^*$, provides a coherent, query-focused topological summary of the original complex, which can be directly utilized for downstream tasks such as reasoning or explanation generation.

\begin{algorithm}[h]
\caption{Cellular Lifting of Textual Graphs}
\label{alg: cellular_lifting}
\begin{algorithmic}[1]
\REQUIRE Textual graph $\mathcal{G} = (V, E, \{t_v\}_{v \in V}, \{t_e\}_{e \in E})$, pre-trained language model $\text{LM}(\cdot)$.
\ENSURE A regular cell complex $X$ with cellular embeddings $\{z_v^0\}, \{z_e^1\}, \{z_e^2\}$.
\STATE \textbf{Step 1: Construct the 1-Skeleton and Compute Initial Embeddings}
\STATE $X^{(0)} \gets \emptyset$, $X^{(1)} \gets \emptyset$ \COMMENT{Initialize sets of 0-cells and 1-cells}
\STATE $\mathbf{Z}^0 \gets \{\}$, $\mathbf{Z}^1 \gets \{\}$ \COMMENT{Initialize dictionaries for embeddings}

\FOR{$v \in V$}
    \STATE Create a 0-cell $x_v^0$ for vertex $v$
    \STATE $X^{(0)} \gets X^{(0)} \cup \{x_v^0\}$
    \STATE $z_v^0 \gets \text{LM}(t_v)$ \COMMENT{Encode vertex text attribute}
    \STATE $\mathbf{Z}^0[x_v^0] \gets z_v^0$
\ENDFOR

\FOR{$e = (u, v) \in E$}
    \STATE Create a 1-cell $x_e^1$ attached to $x_u^0$ and $x_v^0$
    \STATE $X^{(1)} \gets X^{(1)} \cup \{x_e^1\}$
    \STATE $z_e^1 \gets \text{LM}(t_e)$ \COMMENT{Encode edge text attribute}
    \STATE $\mathbf{Z}^1[x_e^1] \gets z_e^1$
\ENDFOR

\STATE $X^{(1)} \gets X^{(0)} \cup X^{(1)}$ \COMMENT{The 1-skeleton is complete}

\STATE
\STATE \textbf{Step 2: Augment with 2-Cells to Capture Higher-Dimensional Structures}
\STATE $X^{(2)} \gets \emptyset$
\STATE $\mathbf{Z}^2 \gets \{\}$
\STATE $\mathcal{T} \gets \text{SpanningTree}(\mathcal{G})$ \COMMENT{e.g., using BFS or DFS}
\STATE $E_{\text{non-tree}} \gets E \setminus \mathcal{T}$

\FOR{$e \in E_{\text{non-tree}}$}
    \STATE $u, v \gets \text{endpoints}(e)$
    \STATE $\text{cycle} \gets \text{FindFundamentalCycle}(e, \mathcal{T})$ 
    \STATE \COMMENT{$\text{cycle}$ is the unique path from $u$ to $v$ in $\mathcal{T}$ plus edge $e$}
    \STATE Create a 2-cell $x_e^2$
    \STATE Attach $x_e^2$ to the 1-skeleton via the attaching map $\varphi_e: \partial D^2 \to \text{cycle}$
    \STATE $X^{(2)} \gets X^{(2)} \cup \{x_e^2\}$
    \STATE $z_e^2 \gets \text{AggregateCycleEmbeddings}(\text{cycle}, \mathbf{Z}^0, \mathbf{Z}^1)$ 
    \STATE \COMMENT{e.g., mean/max pooling of embeddings of all 0/1-cells in the cycle}
    \STATE $\mathbf{Z}^2[x_e^2] \gets z_e^2$
\ENDFOR

\STATE
\STATE $X \gets X^{(0)} \cup X^{(1)} \cup X^{(2)}$ \COMMENT{The final cell complex}
\STATE \RETURN $X, \mathbf{Z}^0, \mathbf{Z}^1, \mathbf{Z}^2$
\end{algorithmic}
\end{algorithm}

\begin{algorithm}[h]
\caption{Find Fundamental Cycle}
\label{alg: find_cycle}
\begin{algorithmic}[1]
\REQUIRE Non-tree edge $e = (u, v)$, spanning tree $\mathcal{T}$.
\ENSURE An ordered list of 0-cells and 1-cells forming the fundamental cycle.
\STATE $\text{path\_u\_to\_v} \gets \text{GetUniquePathInTree}(u, v, \mathcal{T})$
\STATE $\text{cycle\_vertices} \gets \text{path\_u\_to\_v}.\text{vertices}$
\STATE $\text{cycle\_edges} \gets \text{path\_u\_to\_v}.\text{edges}$
\STATE $\text{cycle\_edges}.\text{append}(e)$ \COMMENT{Add the non-tree edge to complete the cycle}
\STATE \RETURN $\text{cycle\_vertices}, \text{cycle\_edges}$
\end{algorithmic}
\end{algorithm}

\begin{algorithm}[h]
\caption{Topology-Aware Subcomplex Retrieval}
\label{alg: subcomplex_retrieval}
\begin{algorithmic}[1]
\REQUIRE Cell complex $X = X^{(0)} \cup X^{(1)} \cup X^{(2)}$ with embeddings $\mathbf{Z}^0, \mathbf{Z}^1, \mathbf{Z}^2$; query $x_q$; parameters $k, C_2$.
\ENSURE A connected, topology-aware subcomplex $X^* \subseteq X$.
\STATE \textbf{Step 1: Encode Query and Retrieve Top-$k$ Cells}
\STATE $z_q \gets \text{LM}(x_q)$ \COMMENT{Encode the query}
\STATE $S^0 \gets \text{ComputeCosineSimilarity}(z_q, \mathbf{Z}^0)$ \COMMENT{$S^0$ is a list of (cell, score) pairs for all 0-cells}
\STATE $S^1 \gets \text{ComputeCosineSimilarity}(z_q, \mathbf{Z}^1)$ \COMMENT{$S^1$ is a list of (cell, score) pairs for all 1-cells}
\STATE $X^{(0)}_k \gets \text{ArgTopK}(S^0, k)$ \COMMENT{Set of top-$k$ relevant 0-cells}
\STATE $X^{(1)}_k \gets \text{ArgTopK}(S^1, k)$ \COMMENT{Set of top-$k$ relevant 1-cells}

\STATE
\STATE \textbf{Step 2: Prize Assignment}
\STATE $\mathcal{P} \gets \{\}$ \COMMENT{Initialize a prize dictionary for all cells}
\STATE \textbf{// Assign prizes to top-k 0/1-cells based on ranking}
\STATE $\text{rank} \gets 0$
\FOR{$x^0 \in X^{(0)}_k$ (in descending order of similarity)}
    \STATE $\mathcal{P}[x^0] \gets k - \text{rank}$
    \STATE $\text{rank} \gets \text{rank} + 1$
\ENDFOR
\STATE $\text{rank} \gets 0$
\FOR{$x^1 \in X^{(1)}_k$ (in descending order of similarity)}
    \STATE $\mathcal{P}[x^1] \gets k - \text{rank}$
    \STATE $\text{rank} \gets \text{rank} + 1$
\ENDFOR
\STATE \textbf{// Compute prizes for 2-cells based on boundary cells}
\FOR{$x^2 \in X^{(2)}$}
    \STATE $\text{boundary\_prize} \gets 0$
    \STATE \textbf{for each} $x^0 \in \partial_0 x^2$ \textbf{do} $\text{boundary\_prize} \gets \text{boundary\_prize} + \mathcal{P}.get(x^0, 0)$
    \STATE \textbf{for each} $x^1 \in \partial_1 x^2$ \textbf{do} $\text{boundary\_prize} \gets \text{boundary\_prize} + \mathcal{P}.get(x^1, 0)$
    \STATE $\text{cost} \gets |\partial_1 x^2| \cdot C_2$ \COMMENT{Penalize larger faces}
    \STATE $\mathcal{P}[x^2] \gets \text{boundary\_prize} - \text{cost}$
\ENDFOR
\STATE $X^{(2)}_k \gets \text{ArgTopK}(\{\mathcal{P}[x^2] \text{ for } x^2 \in X^{(2)}\}, k)$ \COMMENT{Select top-$k$ 2-cells by prize}

\STATE
\STATE \textbf{Step 3: Prize-Collecting Subcomplex Selection}
\STATE $V \gets X^{(0)} \cup X^{(1)} \cup X^{(2)}$ \COMMENT{Candidate cells across all dimensions}
\STATE $R \gets X^{(0)}_k \cup X^{(1)}_k \cup X^{(2)}_k$ \COMMENT{Top-$k$ cells as prize terminals}
\STATE \textbf{// Assign prizes to 0- and 1-cells based on query similarity}
\STATE \textbf{// Compute 2-cell prizes from boundary consistency with selected cells}
\STATE
\STATE $X^* \gets \text{ApproxPCSTComplex}(X, R, \mathcal{P})$
\STATE \hspace{\algorithmicindent} \COMMENT{Solve generalized PCST over cell complex with multi-dimensional prizes}
\STATE \textbf{// Enforce boundary consistency: any $2$-cell in $X^*$ must share boundary with chosen $0$- and $1$-cells}
\STATE \RETURN $X^*$
\end{algorithmic}
\end{algorithm}

\section{Statement}

\paragraph{Ethics Statement.}  
This research follows the \href{https://iclr.cc/public/CodeOfEthics}{ICLR Code of Ethics}.  
The work does not involve experiments with human participants, collection of personal data, or sensitive information.  
All datasets employed are publicly released and widely used in prior studies.  
The proposed framework is intended to advance topology-aware retrieval and reasoning in textual graphs for question answering, and we do not anticipate direct societal risks or harmful misuse.  
Potential biases have been carefully considered, and we adopt standard practices to minimize unintended artifacts.  
No external funding sources or conflicts of interest influenced the conduct of this work.  

\paragraph{Reproducibility Statement.}  
To support reproducibility, all datasets are publicly available and preprocessing procedures are documented in the main text or supplementary material.  
We provide a complete implementation as part of the supplementary package, which enables replication of our experiments without additional dependencies.  
Hyperparameters, training configurations, and evaluation protocols are described in detail to facilitate verification and future extensions.  

\paragraph{LLM Usage.}  
We acknowledge the use of large language models (LLMs) as auxiliary tools for improving readability and clarity of writing.  
The conceptual development, methodology design, experimental setup, and analysis were entirely conducted and validated by the authors.  
LLMs were not used to generate novel research ideas nor to contribute to technical content.

\end{document}